\newtheorem{theorem}{Theorem}[section]
\newtheorem{lemma}[theorem]{Lemma}
\newtheorem{assumption}{Assumption}
\title{\LARGE \bf
DeepSafeMPC: Deep Learning-Based Model Predictive Control for Safe Multi-Agent Reinforcement Learning
}
\author{Xuefeng Wang$^{1}$, Henglin Pu$^{2}$, Hyung Jun Kim$^{1}$  and Husheng Li$^{1, 2  \dag} $
\thanks{*This work was supported by the National Science Foundation under grants 2135286, 2109295 and 2128455.}
\thanks{$^{1}$Xuefeng Wang, Hyung Jun Kim and Husheng Li are with School of Aeronautics and Astronautics, Purdue University, West Lafayette, IN 47907, USA
        {\tt\small wang6067@purdue.edu}}%
\thanks{$^{2}$Henglin Pu and Husheng Li are with the Elmore Family School of Electrical and Computer Engineering, Purdue University, West Lafayette, IN 47907, USA
        {\tt\small pu36@purdue.edu}}%
\thanks{\dag: Corresponding Author}
}
\begin{document}

\maketitle
\thispagestyle{empty}
\pagestyle{empty}

\begin{abstract}

Safe Multi-agent reinforcement learning (safe MARL) has increasingly gained attention in recent years, emphasizing the need for agents to not only optimize the global return but also adhere to safety requirements through behavioral constraints.
Some recent work has integrated control theory with multi-agent reinforcement learning to address the challenge of ensuring safety. However, there have been only very limited applications of Model Predictive Control (MPC) methods in this domain, primarily due to the complex and implicit dynamics characteristic of multi-agent environments. To bridge this gap, we propose a novel method called  Deep Learning-Based Model Predictive Control for Safe Multi-Agent Reinforcement Learning (DeepSafeMPC). The key insight of DeepSafeMPC is leveraging a centralized deep learning model to well predict environmental dynamics. 
Our method applies MARL principles to search for optimal solutions. 
Through the employment of MPC, the actions of agents can be restricted within safe states concurrently.
We demonstrate the effectiveness of our approach using the Safe Multi-agent MuJoCo environment, showcasing significant advancements in addressing safety concerns in MARL.
\end{abstract}

\section{INTRODUCTION}

The burgeoning field of multi-agent reinforcement learning has garnered considerable attention  \cite{marl_survey} due to its potential to solve a wide range of complex decision-making problems such as robotics \cite{ma_robotics}, path planning \cite{primal}, and games \cite{alpha_Star}. By leveraging the capabilities of multiple learning agents, MARL systems are poised to tackle tasks that are too intricate for solitary agents to handle effectively. However, most of existing works predominantly focus on optimizing returns without adequately addressing the safety concerns such like colliding with other robotics and autonomous cars  \cite{safety}. This oversight raises the risk of agents engaging in behaviors that could lead to catastrophic outcomes.

To address the safety issue \cite{safe_review}, safe reinforcement learning approaches have emerged, enabling the mitigation of risks associated with autonomous agents in real-time and in the real-world scenarios \cite{nips_1, nips_2, nips_3}. 
However, it is still a daunting challenge to develop safe policies for multi-agent systems due to
the presence of multiple agents that introduces non-stationarity \cite{non_stationary} to the environment. Moreover, when agents are compelled to factor in the safety constraints of their counterparts to achieve collective convergence toward a shared secure zone, 
the optimization process for the entire system becomes markedly intricate. This intricacy renders the overall optimization process highly challenging.

To tackle the aforementioned challenges, a fusion of reinforcement learning and control methodologies has emerged. For instance, researchers have delved into the utilization of Lyapunov functions to ensure stability and safety \cite{lyapunov}. However, these Lyapunov functions are typically handcrafted, posing difficulties in construction and lacking generalization. 
Moreover, some efforts aim to employ robust control \cite{robust_control} to ensure the stability in safe RL. However, direct application to complex multi-agent dynamics is hindered because it focuses on linear models.
Additionally, adaptive control has been investigated \cite{adaptive_control}, albeit limited by its ability to address uncertainty effectively.
Among these methodologies, MPC stands out for its potential in facilitating resilient decision-making. This control strategy utilizes a model of the system to make predictions about future states, allows for the optimization of control actions over a future horizon while considering constraints which particularly appealing in the realm of safe RL due to its forward-looking nature.
Nonetheless, while certain safe RL strategies adopt MPC methods to enhance decision-making \cite{safe_mpc, safe_mpc_2}, their effectiveness is diminished in real-world scenarios due to the failure of accounting for the environment's implicit dynamics.

Deriving insights from deep learning-based MPC \cite{deepmpc}, we develop an innovative approach that integrates MPC into addressing safe MARL problems. The resulting algorithm, named DeepSafeMPC, follows a two-step process to provide a solution to this challenge. Initially, leveraging the MARL paradigm, we employ Multi-Agent Proximal Policy Optimization (MAPPO) \cite{mappo} to explore and optimize the policies of the agents. Subsequently, utilizing the policy generated by reinforcement learning as an initial guess, we integrate it into MPC to minimize the overall cost function while maintaining the effectiveness of policy.

This paper endeavors to bridge the gap between theoretical safety considerations in MARL and the practical deployment of robust systems, thereby facilitating more dependable and resilient applications of reinforcement learning in multi-agent scenarios. Our contributions are summarized as follows:

\begin{itemize}
\item We introduce the framework of DeepSafeMPC, a novel method aims to address safety concerns within intricate multi-agent environments, thereby enhancing the reliability of the multi-agent systems.
\item We integrate MPC and MARL paradigms, utilizing MAPPO algorithms to explore and optimize the global return while harnessing the capabilities of MPC to navigate stringent safety constraints effectively.
\item We utilize the deep learning model to accurately forecast implicit environmental dynamics, enhancing MPC's efficiency in minimizing the cost function shaped by safety criteria, thus ensuring robust decision-making.
\item We conduct our simulations within the Safe Multi-agent MuJoCo environment, demonstrating the effectiveness in mitigating safety concerns in the Safe MARL setups.
\end{itemize}

\section{RELATED WORK}
Numerous studies have integrated advanced control theories into Safe MARL. Adaptive control, for instance, targets systems with uncertain parameters, adjusting the controller or model in real-time to enhance performance \cite{adaptive}. Though manually designing Lyapunov functions poses a significant challenge in establishing systematic principles for ensuring agent safety and performance \cite{lynapnov_4}, Lyapunov-based approaches ensure stability and near-constraint satisfaction with each policy update \cite{lynapnov_2, lynapnov_3}. 
Additionally, robust control, a design methodology ensuring stability against predefined bounded disturbances, addresses unknown dynamics and noise effectively. Unlike adaptive control, which adjusts the current parameters, robust control identifies a suitable controller for all potential disturbances and maintains it without further adjustments.

In recent developments, Safe RL methods have been proposed from the MPC standpoint. These approaches have integrated MPC algorithms with the reinforcement learning framework, while they typically treat dynamics as linear models, which is impractical in real-world situations \cite{MPC_1}. Considering multi-agent systems, the dynamics of environments are typically non-linear and highly implicit. Thus, accurately predicting future states becomes essential. While research has been somewhat confined to single-agent scenarios, there has been an investigation into this issue, utilizing deep learning to construct predictive models to address the implicit dynamics \cite{deepmpc, lstm, mpc_learning}. To tackle with the implicit dynamics in multi-agent environments, our method aims to leverage a centralized deep learning-based predictor model to well predict the future states, making it more suitable for multi-agent reinforcement learning applications.

\section{PRELIMINARIES}
\subsection{Constrained Markov Decision Process}
We formalize the problem within the framework of a decentralized partially observable Markov decision process (Dec-POMDP) \cite{pomdp}. A Dec-POMDP is defined by a tuple \( M = (S, A, P, R, \Omega, O, n, \gamma) \), where \( n \) is the number of agents, \( \gamma \in [0,1) \) represents the discount factor, \( S \) denotes the finite state space, and \( \Omega \) is the set of observations. Each agent \( i \) receives a local observation \( o_i \) from the observation function \( O(s, a) \). Each agent selects an action \( a_i \in A \), contributing to a joint action \( \mathbf{a} \in A^n \), which leads to the next state \( s' \) as dictated by the state transition function \( P(s' | s, \mathbf{a}) \). The agents collectively aim for a global reward \( R(s, \mathbf{a}) \), which is determined by the reward function \( r = R(s, a) \). The action-observation history for each agent at time step \( t \) is represented by \( \tau^t \in (\Omega \times A)^t \).

A Constrained Markov Decision Process (CMDP) is a MDP with an additional set of constraints \( C \) that limit the set of allowable policies. The set \( C \) comprises cost functions \( C_k: S \times A \rightarrow \mathbb{R} \), for \( k=1, \ldots, n \), and the \( C \)-return as \( J_C(\tau) = \mathbb{E}_{\tau}\left[\sum_{t=0}^{\infty} \gamma^t C(s^t, a^t)\right] \). The set of admissible policies is then \( \Pi_C = \{ \pi \in \Pi: J_C(\tau) \leq b_i, \forall i \} \), where \( b_i \) denotes the cost's upper bound. The reinforcement learning objective with respect to a CMDP is to discover a policy \( \pi^* \) that maximizes the expected return, namely \( \pi^* = \underset{\pi \in \Pi_C}{\arg\max} \; J(\pi) \).

\subsection{Model Predictive Control}
In this work, we use a centralized model predictive controller to minimize the cost function. We define $s^{t:k}$ and $\mathbf{a}^{t:k}$ as the global system state and input of all agents from time $t$ to $k$, respectively. The model predictive controller finds a set of optimal inputs ${\mathbf{a}^{(t+1:t+T)}}^{*}$ which minimize the cost function $C(\hat{s}^{t+1:t+T}, a^{t+1:t+T})$ over predicted state $ \hat{s} $ and control inputs $\mathbf{a}$ for some finite time horizon $T$:
\begin{equation}
\label{mpc_minimize}
{\mathbf{a}^{(t+1:t+T)}}^{*} = \underset{\mathbf{a}^{t+1:t+T}}{\mathrm{arg\,min}}\ C(\hat{s}^{t+1:t+T}, a^{t+1:t+T}).
\end{equation}
As the dynamics of multi-agent system is implicit in our setting, so we use a nonlinear function $f$ to predict future implicit states:
\begin{equation}
\label{constraint_function}
    \hat{s}^{t+1} = f({s}^t, \mathbf{a}^{t}) .
\end{equation}
We can apply this formula recurrently to predict further states until the end of the horizon $T$.
\section{METHODOLOGY}
\begin{figure}[h]
 \centering
 \includegraphics[width=0.95\linewidth]{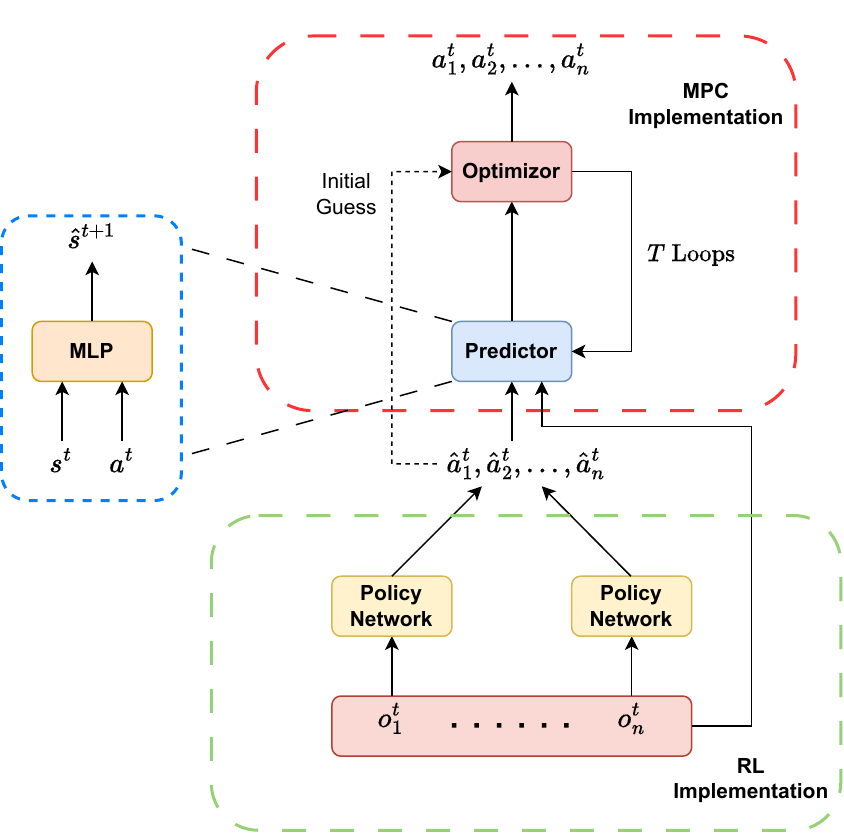}
 \caption{Implementation of DeepSafeMPC. This framework can be divided into RL and MPC parts. Within the RL domain, Policy Networks produce initial action vectors $\{\hat{a}^t_1, \hat{a}^t_2, \ldots, \hat{a}^t_n\} $. These vectors serve as preliminary inputs to the MPC's Predictor component and the initial guess for MPC optimizer. The Predictor, utilizing a Multi-Layer Perceptron (MLP), forecasts the forthcoming state \(\hat{s} ^{t+1} \) based on the current state \( s^t \) and action \( \mathbf{a}^t \). Subsequently, the Optimizer refines these actions into an optimized sequence  $\mathbf{a}^t = \{a^t_1, a^t_2, \ldots, a^t_n\}$ over the decision horizon \( T \). }
 \label{fig:model}
\end{figure}
Expanding on the above fundamental concepts of CMDP and MPC, DeepSafeMPC leverages MARL and MPC to cope with the reward maximization and cost reduction respectively. The whole process can be formulated as: 

\begin{align}
\underset{\pi \in \Pi}{\text{maximize}}
& \quad \mathbb{E}_{\mathbf{a} \sim \pi} \left[ \sum_{t=0}^{M} \gamma^t r(s^t, \mathbf{a}^t) \right], \label{eq:maximize} \\
\text{subject to}
& \quad C(\hat{s}^{t+1:t+T}, \mathbf{a}^{t+1:t+T}) \leq b. \label{eq:subject_to}
\end{align}
where the $b$ is the maximum cost the system can tolerate.

The algorithm progresses in a sequential two-step routine as shown in Figure \ref{fig:model}. Initially, the MAPPO algorithm generates a preliminary action. Then this action will be employed as the initial guess for MPC. MPC leverages a robust deep learning model to predict future states and iteratively refines the action by optimizing a predefined cost function \eqref{eq:subject_to}. This procedure ensures the robustness and safety of the decision-making.

\subsection{Multi-Agent Proximal Policy Optimization }
We now leverage MAPPO algorithm \cite{mappo} to maximize the return in \eqref{eq:maximize}.
MAPPO extends the Proximal Policy Optimization (PPO) algorithm \cite{ppo} to cooperative multi-agent settings, demonstrating remarkable efficiency and effectiveness in a variety of test environments. It adapts PPO's on-policy reinforcement learning approach, leveraging centralized training with decentralized execution to accommodate the complexities of multi-agent interactions.

MAPPO trains two separate neural networks: a decentralized actor network with parameters $\theta$, and a centralized critic function network with parameters $\phi$. This design is import for enabling the different patterns of training and execution to benefit from the respective advantages of centralized training and decentralized execution.

During each timestep $t$, agent $i$ receives a local observation $o_i^t$ and, utilizing the actor network, generates a corresponding action $\hat{a}_i^t$. The corresponding centralized critic network, on the other hand, aggregates observations from all agents to compute a collective critic value $v_i^t$, which serves to guide each agent's learning by providing a global perspective on the environment's state. While the action $\hat{a}_i^t$ is utilized across both execution and training phases, the critic value $v_i^t$ is exclusively employed during training to inform policy updates and enhance cooperative behavior among agents.

The actor network's training objective is designed to maximize the expected return, refined by an entropy term to encourage exploration and policy entropy \cite{soft_reinforcement}:

\begin{equation}
\begin{aligned}
\label{actor_mappo}
L_{\theta}(\theta)&=\left[ \frac{1}{Bn} \sum_{i=1}^{B} \sum_{t=1}^{n} \min \left(r_{\theta}^{t} A_i^{t}, \text{clip}\left(r_{\theta}^{t}, 1 - \epsilon, 1 + \epsilon\right) A_i^{t} \right) \right] \\&+ \sigma \left[ \frac{1}{Bn} \sum_{i=1}^{B} \sum_{t=1}^{n} Z[\pi_\theta(o_i^{t})] \right].
\end{aligned}
\end{equation}
where
$r_{\theta}^{t} = \frac{\pi_\theta(a_i^{t} | o_i^{t})}{\pi_{\theta_{\text{old}}}(a_i^{t} | o_i^{t})}, \ A_i^{t}$  is computed using the GAE \cite{gae} method, $\epsilon$ is the clip parameter, $ \sigma$ is the entropy coefficient hyperparameter and $Z$ is the policy entropy,  which encourages the exploration of new actions. 

In contrast, centralized critic networks aim to update their value function estimations by minimizing a carefully designed loss function. This function quantifies the discrepancy between estimated returns and actual observed returns, leveraging global information to achieve a comprehensive understanding of the environment. By assessing the collective information, the centralized critic provides a crucial feedback mechanism. This mechanism guides individual agents in optimizing their policies, ensuring that each agent's decisions are informed by not just their own experiences, but also by the collective experiences of all agents within the system. 
\begin{equation}
\begin{aligned}
\label{mappo_critic}
L_{\phi}(\phi) = \frac{1}{Bn} \sum_{i=1}^{B} \sum_{t=1}^{n} \max& (V_\phi (s_i^{t}) - \hat{R}_i, \text{clip} (V_\phi  (s_i^{t}), 
 V_{\phi_{\text{old}}}(s_i^{t}) \\& - \epsilon, V_{\phi_{\text{old}}}(s_i^{t}) + \epsilon) - \hat{R}_i )^2 ).
\end{aligned}
\end{equation}

where \(\hat{R}_i\) is the discounted reward-to-go.

In these formulations, $B$ represents the batch size and $n$ the number of agents. This detailed exploration of MAPPO highlights its sophisticated approach to multi-agent reinforcement learning, showcasing the critical roles of policy entropy and centralized critique in developing effective, adaptive cooperative strategies.

\subsection{Dynamics Predictor}

Given the initial actions generated by MAPPO and known system states, we use a deep learning-based MPC to predict the future states. We are currently at some timestep $t$, using the known system state $s^{t}$ and actions $\mathbf{a}^{t}$ as inputs. And our goal is then to predict the future system states $\hat{s}^{t+1:t+T} $ up to time-horizon $T$ by applying our model $\hat{s}^{t+1} = f({s}^t, \mathbf{a}^{t})$ recurrently.

During the training, we use an offline scheme to learn the set of parameters $\eta$. We gather extensive operational data states $s^{t}$,  actions $\mathbf{a}^{t}$ and next states ${s}^{t+1}$ from the environment, which is subsequently buffered. Then we leverage ${s}^{t+1}$ as the ground truth and minimize the error between the ground truth and generated next states $\hat{s}^{t+1}$. For the optimization objective, we utilize the Mean Squared Error (MSE) loss as follows:

\begin{equation}
    \label{mse_loss}
    \text{MSE Loss}= \frac{1}{N} \sum_{i=1}^{N} (\hat{s}^{t+1} - {s}^{t+1})^2 .
\end{equation}

By leveraging historical data, we construct a dataset that accurately captures the transitions between states under a range of  action conditions. This dataset enables a neural network to learn the temporal dynamics governing the evolution of states within the environment. To ascertain the robustness of the MPC's predictor, we provide a straightforward proof and demonstrate the prediction error through experimental results.

In the robustness analysis, our primary objective is to prove that the training error of the proposed method diminishes progressively over time. The training error is denoted by \(e^t = \hat{s}^t - s^t\), and we redefine the equation \eqref{constraint_function} as follows:
\begin{equation}
f_1(s^t, \mathbf{a}^t) = \sigma(\eta^*(s^t, \mathbf{a}^t)),  \label{eq:error_bound_1}
\end{equation}
\begin{equation}
f_2(s^t, \mathbf{a}^t) = \sigma(\eta(s^t, \mathbf{a}^t)). \label{eq:error_bound_2}
\end{equation}
Here, \(f_1(\cdot)\) and \(f_2(\cdot)\) are the same functions in \eqref{constraint_function}, \(\eta^*\) denotes the optimal predictor neural network weights, while \(\eta\) represents the actual predictor weights during training, and \(\sigma(\cdot)\) represents the activation function which means the outputs of \(f_1(\cdot)\) and \(f_2(\cdot)\) are bounded.
\begin{assumption}\label{ap:model_prediction}
The training samples and the neural network's weights \(\eta\) are bounded, and there exists a set of optimal weights \(\eta^*\).
\end{assumption}
\begin{lemma}
\label{lemma:1}
If the conditions of Assumption \ref{ap:model_prediction} hold, we have:
\begin{equation}
\| \eta^* - \eta \| < \varepsilon_w, \label{23}
\end{equation}
\begin{equation}
\| f_1(s^t, a^t) - f_2(s^t, a^t) \| = \| e(t) \| \leq \varepsilon_e, \label{24}
\end{equation}
where \(\varepsilon_w > 0\), \(\varepsilon_e > 0\), they represents the bound of distance between the optimal weights and actual weights and the distance between the outputs predictor generated by these two set of weights. Here, \(\| \cdot \|\) denotes the \(L_2\) norm.
\end{lemma}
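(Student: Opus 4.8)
The plan is to obtain both inequalities as direct consequences of the boundedness hypotheses collected in Assumption~\ref{ap:model_prediction}, using only the $L_2$ triangle inequality together with the boundedness (or Lipschitz continuity) of the activation $\sigma(\cdot)$; no quantitative analysis of the training dynamics is needed for the statement as written.

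First I would establish \eqref{23}. Assumption~\ref{ap:model_prediction} guarantees that the weights $\eta$ visited during training stay bounded, say $\|\eta\| \le B_\eta$, and that an optimal weight set $\eta^*$ exists; taking $\eta^*$ bounded as well, $\|\eta^*\| \le B_{\eta^*}$, the triangle inequality gives $\|\eta^* - \eta\| \le \|\eta^*\| + \|\eta\| \le B_{\eta^*} + B_\eta$, so it suffices to set $\varepsilon_w := B_{\eta^*} + B_\eta > 0$.

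Next I would establish \eqref{24}, for which there are two natural routes. The direct route uses the remark after \eqref{eq:error_bound_2} that $\sigma(\cdot)$ has bounded range, hence $f_1(s^t,a^t)$ and $f_2(s^t,a^t)$ both lie in a ball of some radius $M$; then $\|e(t)\| = \|f_1(s^t,a^t) - f_2(s^t,a^t)\| \le 2M =: \varepsilon_e$. The sharper route, which links the two displays, assumes $\sigma$ is $L_\sigma$-Lipschitz and that, for a fixed bounded input $(s^t,a^t)$, the pre-activation map $\eta \mapsto \eta(s^t,a^t)$ is Lipschitz in the weights with a constant $L$ depending only on the (bounded) input, giving
\[
\|e(t)\| = \bigl\| \sigma(\eta^*(s^t,a^t)) - \sigma(\eta(s^t,a^t)) \bigr\| \le L_\sigma \, \bigl\| \eta^*(s^t,a^t) - \eta(s^t,a^t) \bigr\| \le L_\sigma L \, \|\eta^* - \eta\| < L_\sigma L \, \varepsilon_w =: \varepsilon_e .
\]

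The step I expect to require the most care is this Lipschitz-in-the-weights bound for the pre-activation map in the sharper route: for a multi-layer perceptron it follows by composing layer-by-layer estimates, each controlled because the network input, the intermediate activations (again bounded since $\sigma$ is bounded), and the weights are all bounded, but it does require unwinding the network one layer at a time and tracking how the constants accumulate. If one is content with the weaker direct route, the lemma is essentially immediate from the boundedness of $\sigma$ alone, so the main content of the argument is deciding how tightly to couple \eqref{23} and \eqref{24}.
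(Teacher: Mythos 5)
Your proposal is correct and follows essentially the same route as the paper: the paper likewise derives \eqref{23} from the boundedness of $\eta$ and $\eta^*$ in a finite-dimensional space (your triangle-inequality step just makes this explicit), and derives \eqref{24} from the boundedness of the weights together with the continuity/boundedness of the activation, which is your ``direct route.'' Your optional sharper Lipschitz route is not taken in the paper's proof of the lemma, though it is closer in spirit to the $\|\sigma(\eta^*(s^t,\mathbf{a}^t)) - \sigma(\eta(s^t,\mathbf{a}^t))\| \leq \|\eta^*(s^t,\mathbf{a}^t) - \eta(s^t,\mathbf{a}^t)\|$ step the paper later uses in the theorem's Lyapunov argument.
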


\begin{proof}
Given that the weights \(\eta\) are bounded, optimal set of weights \(\eta^*\) exists. This implies that for the bounded sequence of weights in a finite-dimensional vector space, there is a finite maximum distance \(\varepsilon_w > 0\).
Due to the boundeness of \(\eta\) and the continuity of the neural network function, the neural network outputs for optimal and actual weights does not exceed a bound \(\varepsilon_e\), as stated in equation \eqref{24}.
\end{proof}

\begin{theorem}
With the deep learning-based predictive model in place, the training error \(e(t)\) is uniformly ultimately bounded (UUB).
\end{theorem}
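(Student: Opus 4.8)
The plan is to run a discrete-time Lyapunov argument on the prediction error $e(t)=\hat{s}^t-s^t$, exploiting the recurrent structure $\hat{s}^{t+1}=f_2(\hat{s}^t,\mathbf{a}^t)$ of the learned model against the idealized optimal model $s^{t+1}=f_1(s^t,\mathbf{a}^t)$, together with the two bounds supplied by Lemma \ref{lemma:1}. First I would write the one-step error propagation as
\[
e(t+1)=\bigl[f_2(\hat{s}^t,\mathbf{a}^t)-f_2(s^t,\mathbf{a}^t)\bigr]+\bigl[f_2(s^t,\mathbf{a}^t)-f_1(s^t,\mathbf{a}^t)\bigr],
\]
and bound the two brackets separately. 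The second bracket is exactly the actual-versus-optimal predictor gap, so $\|f_2(s^t,\mathbf{a}^t)-f_1(s^t,\mathbf{a}^t)\|\le\varepsilon_e$ by \eqref{24}. The first bracket is controlled by the Lipschitz continuity of the bounded activation $\sigma$ and the boundedness of the weights granted by Assumption \ref{ap:model_prediction}, yielding $\|f_2(\hat{s}^t,\mathbf{a}^t)-f_2(s^t,\mathbf{a}^t)\|\le\rho\,\|e(t)\|$ for some constant $\rho$. Combining the two gives the key recursion $\|e(t+1)\|\le\rho\,\|e(t)\|+\varepsilon_e$.

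Next I would take the Lyapunov candidate $V(t)=\|e(t)\|^2$ and estimate
\[
\Delta V(t)=V(t+1)-V(t)\le(\rho^2-1)\,\|e(t)\|^2+2\rho\varepsilon_e\,\|e(t)\|+\varepsilon_e^2 .
\]
Assuming the state-to-state map is a contraction, i.e.\ $\rho<1$, the right-hand side is a downward parabola in $\|e(t)\|$, hence $\Delta V(t)<0$ whenever $\|e(t)\|>\mu$ with $\mu=\varepsilon_e/(1-\rho)$. Invoking the standard UUB argument — a Lyapunov function strictly decreasing outside a compact ball forces the trajectory into that ball in finite time and keeps it there — I conclude that $e(t)$ is uniformly ultimately bounded with ultimate bound $\mu$; equivalently, iterating the recursion gives $\|e(t)\|\le\rho^{t}\|e(0)\|+\varepsilon_e/(1-\rho)\to\varepsilon_e/(1-\rho)$, which makes the asymptotic behavior explicit.

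The main obstacle is justifying the contraction constant $\rho<1$: bounded weights plus a Lipschitz (bounded) activation only guarantee \emph{some} finite Lipschitz constant of $f_2$ in the state argument, not one below $1$. I would close this gap either by stating it as an explicit structural condition on the trained predictor (enforced in practice through weight regularization or layer normalization), or by folding the offline gradient-descent weight update into the Lyapunov function via the weight error $\tilde{\eta}(t)=\eta^*-\eta(t)$ and using \eqref{23} to absorb its contribution, at the cost of a more elaborate $\Delta V$ computation. A weaker but still adequate fallback, sufficient for the finite MPC horizon $T$, is to drop the contraction requirement entirely and simply propagate $\|e(t+k)\|\le\rho^k\|e(t)\|+\varepsilon_e\sum_{j=0}^{k-1}\rho^{j}$ for $k\le T$, which remains bounded for any finite $\rho$ and horizon.
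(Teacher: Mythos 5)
Your argument takes a genuinely different route from the paper's. The paper works in continuous time: it posits error dynamics $\dot{e}(t)=f_1(s^t,\mathbf{a}^t)-f_2(s^t,\mathbf{a}^t)-e(t)$, differentiates the Lyapunov function $V=\tfrac12 e^T e$, and obtains $\dot{V}\le -\|e\|(\|e\|-\varepsilon_w)$ using the (implicitly $1$-Lipschitz) activation and the weight bound from Lemma~\ref{lemma:1}; UUB then follows because $\dot V\le 0$ outside the ball of radius $\varepsilon_w$. You instead set up a discrete-time one-step error recursion through the recurrent rollout, splitting $e(t+1)$ into a state-propagation term and a model-mismatch term, and drive the conclusion from $\|e(t+1)\|\le\rho\|e(t)\|+\varepsilon_e$. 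Your decomposition is arguably the more honest model of what actually happens when the predictor is applied recurrently over the MPC horizon (the paper's equation \eqref{24} evaluates both networks at the \emph{true} state $s^t$, which makes the one-step error trivially bounded and leaves unclear what dynamical system the paper's $\dot e$ refers to), and your explicit geometric-sum bound $\|e(t+k)\|\le\rho^k\|e(t)\|+\varepsilon_e\sum_{j=0}^{k-1}\rho^j$ is more informative than the paper's qualitative conclusion.

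The gap you flag yourself is real and is the crux: Assumption~\ref{ap:model_prediction} plus a bounded Lipschitz activation give you \emph{some} finite $\rho$, not $\rho<1$, and without contraction the recursion does not yield UUB over an unbounded horizon. You should be aware that the paper does not escape this issue either --- it hides it inside the posited error dynamics, whose $-e(t)$ term is precisely a built-in unit-rate decay that plays the role of your contraction and is asserted rather than derived from the network or the training procedure. Of your three proposed repairs, stating $\rho<1$ as an explicit structural condition on the trained predictor is the cleanest and makes your argument strictly more rigorous than the paper's; the finite-horizon fallback is also legitimate for the MPC application but proves a weaker statement than UUB, so if you take that route you should say explicitly that you are establishing boundedness over the horizon $T$ rather than an ultimate bound. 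One small bookkeeping point: the paper's own proof concludes with the bound $\varepsilon_e$ after deriving the threshold $\varepsilon_w$, conflating the two constants; your version keeps $\varepsilon_e$ and $\rho$ cleanly separated, which is preferable.
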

\begin{proof}
We construct the Lyapunov function \cite{lstm} to analyze prediction errors:
\begin{equation}
    V(e(t)) = \frac{1}{2} e^T(t) e(t). 
\end{equation}
The derivative of \(V(e(t))\) is expressed as:
\begin{align*}
\label{25}
    \dot{V}(e(t)) &= e^T(t)\dot{e}(t) \\
    &= e^T(t) (f_1(s^t, \mathbf{a}^t) - f_2(s^t, \mathbf{a}^t) - e(t)) \\
    &= -\|e(t)\|^2 + \|e(t)\| \|\sigma(\eta^*(s^t, \mathbf{a}^t)) - \sigma(\eta(s^t, \mathbf{a}^t))\| \\
    &\leq -\|e(t)\|^2 + \|e(t)\| \|\eta^*(s^t, \mathbf{a}^t) - \eta(s^t, \mathbf{a}^t)\| \\
    &= -\|e(t)\|^2 + \|e(t)\| \varepsilon_w \\
    &= -\|e(t)\|(\|e(t)\| - \varepsilon_w).
\end{align*}
Above analysis suggests that:

When the error norm \(\|e(t)\|\) falls short of \(\varepsilon_w\), the derivative of the Lyapunov function becomes positive, indicating an escalation in the training error up to the point where \(\|e(t)\|\) reaches the threshold \(\varepsilon_w\). If the error norm equals or exceeds \(\varepsilon_w\), then \(\dot{V}(e(t))\) will become non-positive, leading to an error norm that is ultimately bounded by \(\varepsilon_e\).

In light of these findings, it can be concluded that the training error will asymptotically be less than or equal to \(\varepsilon_e\), which proves the robustness of the predictor model.
\end{proof}

\subsection{Integration with MPC}

By leveraging the deep learning-based predictor, we integrate it into MPC to cope with the implicit dynamics. The goal of MPC in this work is to find a suitable control signal $\mathbf{a}^t$ through online optimization. In this work, given prediction and control horizons $T$, the optimization problem of MPC can be formulated as follows:

\begin{equation}
\label{eq:mpc_optim}
\begin{aligned}
\underset{\mathbf{a}(t)}{\text{min}} \ &C(\hat{s}^{t+1:t+T}, \mathbf{a}^{t+1:t+T}), \\
\text{subject to}& \\
& \dot{\hat{s}}^{t+1} = f(s^t, \mathbf{a}^t), \\
& s_{\text{min}} \leq s^t \leq s_{\text{max}}, \\
& \hat{\mathbf{a}}_{\text{min}} \leq {\mathbf{a}}^t \leq \hat{\mathbf{a}}_{\text{max}}.
\end{aligned}
\end{equation}

For the optimization, we leverage Sequential Quadratic Programming (SQP) to solve the nonlinear optimization problem. 

First, we need to define the Lagrangian function of nonlinear programming:

\begin{equation}
\label{eq:lagrangian}
\begin{aligned}
L\left(s^{t}, \mathbf{a}^{t}, \mu, \nu, \xi\right) &= C\left(s^{t+1:t+T}, \mathbf{a}^{t+1:t+T}\right) \\
&+ \sum_{\tau=t+1}^{t+T} \lambda_{\tau} \left(f\left(s^{\tau}, \mathbf{a}^{\tau}\right) - \hat{s}^{\tau}\right) \\
&+ \sum_{\tau=t+1}^{t+T} \mu_{\tau} \left(s_{\min} - s^{\tau}\right) + \nu_{\tau} \left(s^{\tau} - s_{\max}\right) \\
&+ \sum_{\tau=t+1}^{t+T} \xi_{\tau} \left(\mathbf{a}_{\min} - \mathbf{a}^{\tau}\right) + \xi_{\tau} \left(\mathbf{a}^{\tau} - \mathbf{a}_{\max}\right).
\end{aligned}
\end{equation}

where $\lambda_{\tau}$ are the Lagrange multipliers for the equality constraints given by the system dynamics, $\mu_{\tau}$ and $\nu_{\tau}$ are the multipliers for the state constraints, and $\xi_{\tau}$ are the multipliers for the control action constraints.  The Karush-Kuhn-Tucker (KKT) optimality conditions of \eqref{eq:mpc_optim} are:

\begin{equation}
\label{eq:kkt}
\begin{aligned}
&\begin{bmatrix}
\nabla_{s} \mathcal{L}(s^{*}, \mathbf{a}^{*}, \mu^{*}, \nu^{*}, \xi^{*}, \zeta^{*}) \\
\nabla_{\mathbf{a}} \mathcal{L}(s^{*}, \mathbf{a}^{*}, \mu^{*}, \nu^{*}, \xi^{*}, \zeta^{*})
\end{bmatrix} 
\\
= &
\left[\begin{array}{l}
J_{1}\left(s^{*}\right)^{T}+J_{f}\left(s^{*}\right)^{T} \lambda^{*}-\mu^{*}+\nu^{*} \\
J_{2}\left(\mathbf{a}^{*}\right)^{T}+J_{f}\left(\mathbf{a}^{*}\right)^{T} \lambda^{*}-\xi^{*}+\zeta^{*}
\end{array}\right],\\
= &
\begin{bmatrix}
0 \\
0 
\end{bmatrix}.
\end{aligned}
\end{equation}

where $J_1(s)$ and $J_2(\mathbf{a})$ denote the Jacobian matrices     of $C(s, \mathbf{a})$ with respect to $s$ and $a$ respectively, and $J_f(s, \mathbf{a})$ denotes the Jacobian matrix of the system dynamics $f(s, \mathbf{a})$ with respect to $s$ and  $a$.

In SQP, each iteration involves solving a al Quadratic Programming (QP) subproblem to obtain a search direction for the optimization variables. The QP subproblem is formulated based on a quadratic approximation of the objective function and a linear approximation of the constraints around the current iterate $k$. And the QP subproblem for a particular iteration 
$(s^k, \mathbf{a}^k)$ can be stated as follows:

\begin{equation}
\label{eq:qp_sub}
\begin{aligned}
\min_{\Delta s, \Delta a} \quad & \frac{1}{2}
\begin{bmatrix}
\Delta s \\
\Delta \mathbf{a} 
\end{bmatrix}^{\mathrm{T}}
H^{k}
\begin{bmatrix}
\Delta s \\
\Delta \mathbf{a}
\end{bmatrix}
+ 
\begin{bmatrix}
\nabla_{s} C(s^{k}, \mathbf{a}^{k})^{\mathrm{T}} \\
\nabla_{\mathbf{a}} C(s^{k}, \mathbf{a}^{k})^{\mathrm{T}}
\end{bmatrix}^{\mathrm{T}}
\begin{bmatrix}
\Delta s \\
\Delta \mathbf{a}
\end{bmatrix},
\\
\text{subject to}\quad & J_f(s^k)^T \Delta s + J_f(\mathbf{a}^k)^T \Delta \mathbf{a} = y^{k+1} - f(s^k, \mathbf{a}^k), \\
& s_{\min} - s^k \leq \Delta s \leq s_{\max} - s^k, \\
& \mathbf{a}_{\min} - \mathbf{a}^k \leq \Delta \mathbf{a} \leq \mathbf{a}_{\max} - \mathbf{a}^k.
\end{aligned}
\end{equation}

where $H^{k}$ is an approximation to the Hessian of the Lagrangian, $\Delta s$ and $\Delta \mathbf{a} $ are the steps in the state and control variables, respectively.
$\nabla_s C(s^k, \mathbf{a}^k)$ and $\nabla_a C(s^k, \mathbf{a}^k)$ are the gradients of the cost function with respect to the state and control variables at iteration $k$.
The inequalities for $\Delta s$ and $\Delta \mathbf{a}$ ensure that the updated state and control variables remain within their respective bounds.

\begin{assumption}
    Assuming that the starting point is feasible and that the initial Hessian approximation $H^k$ is positive definite.
\end{assumption}

\begin{lemma}
\label{lemma:descent_direction}
The solution to the quadratic
programming (QP) subproblem \eqref{eq:qp_sub} yields a descent direction if the cost function $C(s,\mathbf{a})$ and dynamic function $f(s,\mathbf{a})$ are continuously differentiable.
\end{lemma}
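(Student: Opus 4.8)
The plan is to establish that the unique minimizer of the QP subproblem \eqref{eq:qp_sub} is a descent direction for an $\ell_1$ merit function, which is the standard sense in which SQP search directions are called descent directions. First I would invoke the standing assumption that $H^k$ is positive definite: this makes the quadratic objective in \eqref{eq:qp_sub} strictly convex over the convex feasible set (an affine subspace cut out by a box), so the subproblem has a unique global solution $d^k=(\Delta s^k,\Delta\mathbf{a}^k)$ and its KKT conditions are both necessary and sufficient. Writing $g^k$ for the cost gradient $(\nabla_s C(s^k,\mathbf{a}^k),\,\nabla_{\mathbf{a}}C(s^k,\mathbf{a}^k))$, $A^k$ for the dynamics Jacobian appearing in the linearized equality of \eqref{eq:qp_sub}, and $c^k := f(s^k,\mathbf{a}^k)-y^{k+1}$ for the current nonlinear constraint residual, stationarity of the QP Lagrangian reads $H^k d^k + g^k + (A^k)^{\mathrm T}\lambda^k + \beta^k = 0$, where $\lambda^k$ are the multipliers of the linearized dynamics and $\beta^k$ the sign-constrained multipliers of the active box constraints on $\Delta s$ and $\Delta\mathbf{a}$.

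Next I would take the inner product of this stationarity identity with $d^k$. Using the linearized equality $A^k d^k = -c^k$, and complementarity for the box constraints — whose contribution $(\beta^k)^{\mathrm T} d^k$ is nonnegative because $d^k$ is by construction a feasible step into the box — this collapses to
\[
g^{k\,\mathrm T} d^k \;\le\; -\,d^{k\,\mathrm T} H^k d^k + (\lambda^k)^{\mathrm T} c^k .
\]
I would then introduce the $\ell_1$ merit function $\phi_\rho(s,\mathbf{a}) = C(s,\mathbf{a}) + \rho\,\|c(s,\mathbf{a})\|_1$, where $c(\cdot)$ collects the nonlinear dynamics residuals being driven to zero. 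Because $C$ and $f$ are continuously differentiable — the hypothesis of the lemma — the one-sided directional derivative of $\phi_\rho$ along $d^k$ exists, and since $d^k$ annihilates the linearized constraint it equals $g^{k\,\mathrm T} d^k - \rho\|c^k\|_1$. Combining with the displayed bound yields $\phi_\rho'(s^k,\mathbf{a}^k;d^k) \le -\,d^{k\,\mathrm T} H^k d^k + (\|\lambda^k\|_\infty - \rho)\|c^k\|_1$, so choosing the penalty parameter $\rho > \|\lambda^k\|_\infty$ forces $\phi_\rho'(s^k,\mathbf{a}^k;d^k) \le -\,d^{k\,\mathrm T} H^k d^k$. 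Positive definiteness of $H^k$ then makes the right-hand side strictly negative whenever $d^k \neq 0$; and if $d^k = 0$, the stationarity identity reduces exactly to the KKT system \eqref{eq:kkt} of \eqref{eq:mpc_optim}, i.e.\ the iterate is already a KKT point and there is nothing to show.

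The hard part will be the bookkeeping in the middle steps rather than any deep analysis: getting the signs right so that the active box-constraint multiplier term genuinely helps the inequality, and justifying that a single penalty parameter $\rho$ can be chosen to dominate $\|\lambda^k\|_\infty$ — which rests on boundedness of the QP multipliers, itself a consequence of the positive-definite Hessian together with feasibility of the starting point. The remaining ingredients — strict convexity of the QP, sufficiency of KKT for convex programs, and the chain rule for the directional derivative of $\|c(\cdot)\|_1$ along a direction that zeroes its linearization — are routine.
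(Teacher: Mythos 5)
Your proof is correct in substance but takes a genuinely different --- and considerably more careful --- route than the paper. The paper's own proof is a two-line assertion: since $H^k$ is positive definite and $(\Delta s^*,\Delta\mathbf{a}^*)$ minimizes the quadratic model, the directional derivative $\nabla C(s^k,\mathbf{a}^k)^{\mathrm T}[\Delta s^*\;\Delta\mathbf{a}^*]$ is claimed to be negative, full stop; no merit function, no multipliers, no treatment of the linearized equality constraint. You instead run the standard $\ell_1$-merit-function argument (contract the QP stationarity condition with $d^k$, use $A^k d^k=-c^k$ and complementarity of the box multipliers, then choose $\rho>\|\lambda^k\|_\infty$), which proves descent for $\phi_\rho = C + \rho\|c\|_1$ rather than for $C$ alone. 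This difference matters: when the current iterate does not satisfy the nonlinear dynamics constraint ($c^k\neq 0$), the QP step must move to reduce infeasibility and need not decrease $C$ itself, so the inequality the paper actually writes down, $\nabla C^{\mathrm T}d^k<0$, is not guaranteed by positive definiteness of $H^k$ alone --- it holds only in the special case $c^k=0$ (where your bound collapses to $g^{k\,\mathrm T}d^k\le -d^{k\,\mathrm T}H^kd^k<0$). Notably, the paper's subsequent convergence theorem invokes precisely the property you prove (``each iterate is a descent direction for the merit function''), so your version supplies the statement the downstream argument actually needs, at the cost of the multiplier bookkeeping and the penalty-parameter selection you correctly flag as the delicate steps.
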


\begin{proof}
Since $H^k$ is positive definite and $(\Delta s^*, \Delta \mathbf{a}^*)$ minimizes the quadratic model, the change in the objective function along $(\Delta s^*, \Delta \mathbf{a}^*)$ from $(s^k, \mathbf{a}^k)$ will be negative. Hence it is a descent direction:
\begin{equation}
\label{eq:descent_direction}
 \nabla C(s^k, \mathbf{a}^k)^\mathrm{T}[\Delta s^* \; \Delta \mathbf{a}^*] < 0.  
\end{equation}

\end{proof}

\begin{theorem}
If the linear independence constraint qualification (LICQ) holds at each iteration. The sequence ${(s^k,\mathbf{a}^k)}$ generated by the SQP algorithm converges to a stationary point of the original problem. And the accumulation point $(s^*,\mathbf{a}^*)$ is a local minimum.
\end{theorem}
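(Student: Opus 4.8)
The plan is to run the classical global-convergence argument for line-search SQP on top of Lemma~\ref{lemma:descent_direction}. Since the QP subproblem \eqref{eq:qp_sub} only supplies a search direction, I would equip the iteration with a step length $\alpha^k\in(0,1]$ chosen by an Armijo backtracking line search on the $\ell_1$ exact penalty (merit) function $\Phi_\rho(s,\mathbf{a}) = C(s,\mathbf{a}) + \rho\,\|c(s,\mathbf{a})\|_1$, where $c(s,\mathbf{a})$ collects the equality residual $f(s^t,\mathbf{a}^t)-\hat s^{t+1}$ together with the violations of the box constraints in \eqref{eq:mpc_optim}, and $\rho>0$ is a fixed penalty weight. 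The first substantive step is to show that the QP direction $(\Delta s^*,\Delta\mathbf{a}^*)$ is a descent direction for $\Phi_\rho$: combining the strict decrease of the quadratic objective model from Lemma~\ref{lemma:descent_direction} with the linearized constraint equation in \eqref{eq:qp_sub}, one obtains a directional-derivative bound of the form $D\Phi_\rho(s^k,\mathbf{a}^k;\Delta s^*,\Delta\mathbf{a}^*)\le -(\Delta s^*,\Delta\mathbf{a}^*)^{\mathrm T}H^k(\Delta s^*,\Delta\mathbf{a}^*) - (\rho-\|\lambda^k\|_\infty)\,\|c(s^k,\mathbf{a}^k)\|_1$, which is strictly negative whenever $(s^k,\mathbf{a}^k)$ is not already a KKT point, provided $\rho$ exceeds every $\|\lambda^k\|_\infty$. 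This is exactly where LICQ enters: at each iterate LICQ makes the QP multipliers $\lambda^k,\mu^k,\nu^k,\xi^k$ unique and keeps them uniformly bounded, so a single finite $\rho$ works for all $k$.

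Next I would establish convergence of the iterates. By the standing assumption that $H^k$ is positive definite---strengthened in the usual way to \emph{uniform} positive definiteness and uniform boundedness, e.g.\ via a damped BFGS update---and because the feasible box in \eqref{eq:mpc_optim} is compact while $C$ and $f$ are continuously differentiable, $\Phi_\rho$ is bounded below on the feasible set. The Armijo rule then makes $\{\Phi_\rho(s^k,\mathbf{a}^k)\}$ monotonically decreasing and convergent, so $\sum_k \alpha^k (\Delta s^*,\Delta\mathbf{a}^*)^{\mathrm T}H^k(\Delta s^*,\Delta\mathbf{a}^*)<\infty$; a Zoutendijk-type estimate, using the uniform bounds on $H^k$ and $(H^k)^{-1}$ and the Lipschitz continuity of the gradients on the compact feasible set to bound $\alpha^k$ away from zero, forces $\|(\Delta s^*,\Delta\mathbf{a}^*)\|\to 0$. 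Passing to the limit in the QP optimality conditions and identifying the limiting multipliers (again using LICQ) shows that any accumulation point $(s^*,\mathbf{a}^*)$ satisfies the KKT system \eqref{eq:kkt} of \eqref{eq:mpc_optim}, i.e.\ it is a stationary point; uniqueness of the multipliers under LICQ then upgrades subsequential convergence to convergence of the whole sequence to this point.

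Finally, to promote ``stationary point'' to ``local minimum'', I would invoke the second-order sufficient optimality conditions. The limiting Hessian surrogate $H^*$ is positive definite on the whole space, hence a fortiori positive definite on the critical cone of the active constraints at $(s^*,\mathbf{a}^*)$; together with the KKT conditions already established (and strict complementarity, or a slightly more careful critical-cone argument if strict complementarity fails), this is precisely the second-order sufficient condition, which certifies that $(s^*,\mathbf{a}^*)$ is an isolated local minimizer of \eqref{eq:mpc_optim}. I expect the main obstacle to be the globalization bookkeeping rather than the limiting argument: rigorously proving the merit-function descent inequality, pinning down a uniform penalty parameter (which genuinely requires LICQ, not merely feasibility), and securing the uniform positive-definiteness and boundedness of $H^k$ that the Zoutendijk step consumes; once those are in hand, vanishing of the KKT residual and the second-order conclusion follow by routine compactness arguments.
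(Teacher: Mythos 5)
Your proposal follows the same skeleton as the paper's proof --- a merit-function line-search globalization of SQP: Lemma~\ref{lemma:descent_direction} supplies a descent direction for a merit function, the line search forces monotone decrease, boundedness yields an accumulation point, and the vanishing KKT residual identifies that point as stationary. The difference is one of rigor rather than route: the paper's argument is a four-sentence sketch that never names the merit function, never explains why a descent direction for the quadratic model is also a descent direction for the merit function, never says where LICQ is used, and simply asserts the final ``local minimum'' claim. You correctly supply every one of those missing ingredients --- the $\ell_1$ exact penalty, the directional-derivative bound $D\Phi_\rho \le -(\Delta s^*,\Delta\mathbf{a}^*)^{\mathrm T}H^k(\Delta s^*,\Delta\mathbf{a}^*) - (\rho - \|\lambda^k\|_\infty)\|c\|_1$, the role of LICQ in uniformly bounding the multipliers so that a single $\rho$ suffices, the strengthening of the paper's assumption (which only posits positive definiteness of the \emph{initial} $H^k$) to uniform positive definiteness, and the Zoutendijk step that drives the QP solution to zero. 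This is the proof the paper should have written for the stationarity claim.

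The one step that does not close, in your argument or the paper's, is the promotion of ``stationary point'' to ``local minimum.'' You invoke second-order sufficiency via positive definiteness of the limiting surrogate $H^*$, but $H^k$ is only a quasi-Newton \emph{approximation} to the Hessian of the Lagrangian; its positive definiteness is an algorithmic design choice (e.g.\ damped BFGS keeps it positive definite even when the true Hessian is indefinite) and implies nothing about the curvature of the actual Lagrangian on the critical cone at $(s^*,\mathbf{a}^*)$. Without an additional hypothesis on the problem itself --- that the true Hessian of the Lagrangian is positive definite on the critical cone at the limit --- the accumulation point could be a saddle that happens to satisfy the KKT system. The paper's proof has the identical defect, asserting local minimality with no argument at all, so your attempt at least localizes exactly what extra assumption the theorem statement is silently consuming.
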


\begin{proof}
    By Lemma \eqref{lemma:descent_direction} each iterate $(\nabla s,\nabla \mathbf{a})$ is a descent direction for the merit function.
    A line search along this direction yields a new iteration that reduces the merit function.
    Because of the descent property and the line search conditions, the sequence of iterates is bounded and therefore has an accumulation point.
    The accumulation point satisfies the KKT conditions \eqref{eq:kkt} due to the properties of the merit function and the descent method.  Since the sequence of iterations is bounded and the KKT residual is converging to zero, there exists an accumulation point $(s^*,\mathbf{a}^*)$  of the sequence that satisfies the KKT conditions and this point is the local minimum.

\end{proof}

\subsection{Implementation}
\begin{algorithm}
\caption{Training Process of DeepSafeMPC}
\begin{algorithmic}[1]
\State Initialize policy network parameters $\theta$, value function parameters $\phi$, predictor model parameters $\eta$
\State Initialize optimization algorithm for MPC with cost function $C$
\State Set learning rates $\alpha_\theta$, $\alpha_\phi$, $\alpha_\eta$
\State Set horizon $T$, episode length $L$, batch size $B$, maximum steps $I_{\max}$
\State Initialize data buffer $\mathcal{D}$, replay buffer $\mathcal{B}$
\For{each MAPPO training episode $e = 1,2,\ldots,E$}
    \State Collect trajectories by executing policy $\pi_\theta$
    \State Store state-action-reward sequences in $\mathcal{D}$ and $\mathcal{B}$
    \State Update policy $\pi_\theta$ and value function $V_\phi$ 
    \Statex \quad \; by (4) and (5)
    \State Clean the replay buffer $\mathcal{B}$
\EndFor
\For{each predictor model training step}
    \State Sample minibatch $D$ from $\mathcal{D}$
    \State Train predictor model $f_\eta$ by minimizing
    \Statex \quad \; MSE loss (6) on $D$
\EndFor
\Procedure{MPC Optimization with Predictor}{}
    \For{each control step $t = 1,2,\ldots,I_{\max}$}
        \State Get current state $s^t$ from the environment
        \State Generate initial action $\mathbf{a}^t$ using policy $\pi_\theta(s^t)$
        \State Initialize trajectory cost $J = 0$
        \For{$t = 0$ to horizon $T$}
            \State Predict next state $s^{t+1}$ using $f_\eta(s^t,\mathbf{a}^t)$
            \State Optimize $\mathbf{a}^{t+1}$ to minimize $C$ 
            \Statex \qquad \qquad \; using predictions from $f_\eta$
            \State Update trajectory cost $J += C(s^t, \mathbf{a}^t)$
        \EndFor
        \State Apply optimized action sequence to the system
    \EndFor
\EndProcedure
\end{algorithmic}
\end{algorithm}

So far we have discussed the components of our method,  we now explain the implementation of DeepSafeMPC in Fig.\ref{fig:model}. In the decentralized execution phase, we use the concatenated observations $\{o_1^{t}, o_2^{t}, ..., o_n^{t}\}$ to approximate the states $s^{t}$ of the environment.
The implementation process is shown in \ref{fig:model}. The training process is included in the Algorithm 1.





\section{EXPERIMENTAL RESULTS}

\begin{figure*}[h]
\centering
\subfigure[Two-Agent Ant]{
\includegraphics[width =  0.3\linewidth]{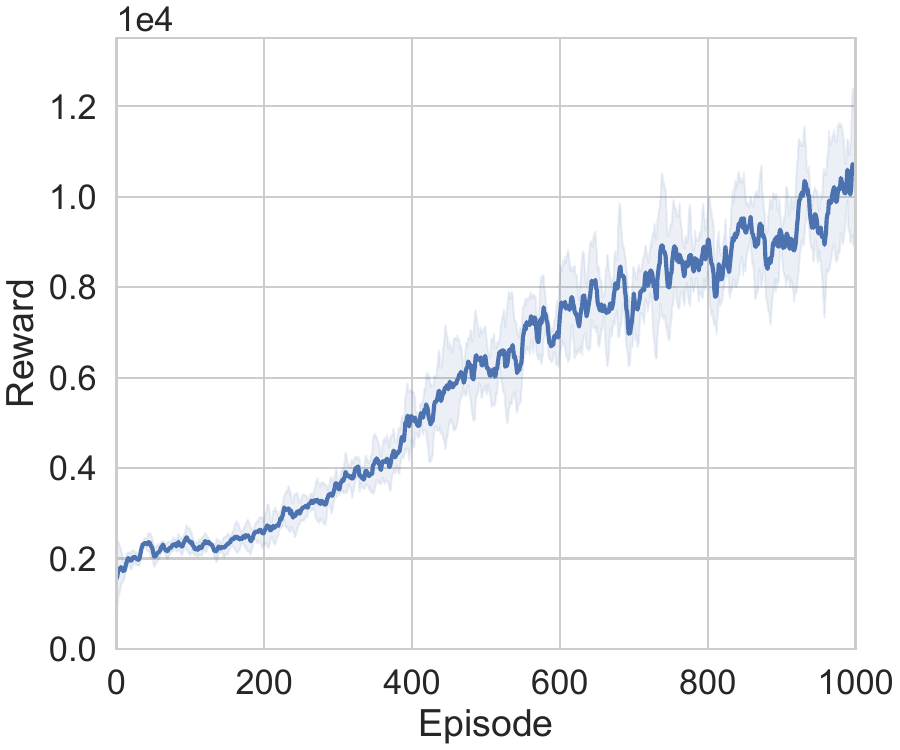}
\label{fig:reward_ant}
}
\subfigure[Half Cheetah]{
\includegraphics[width =  0.3\linewidth]{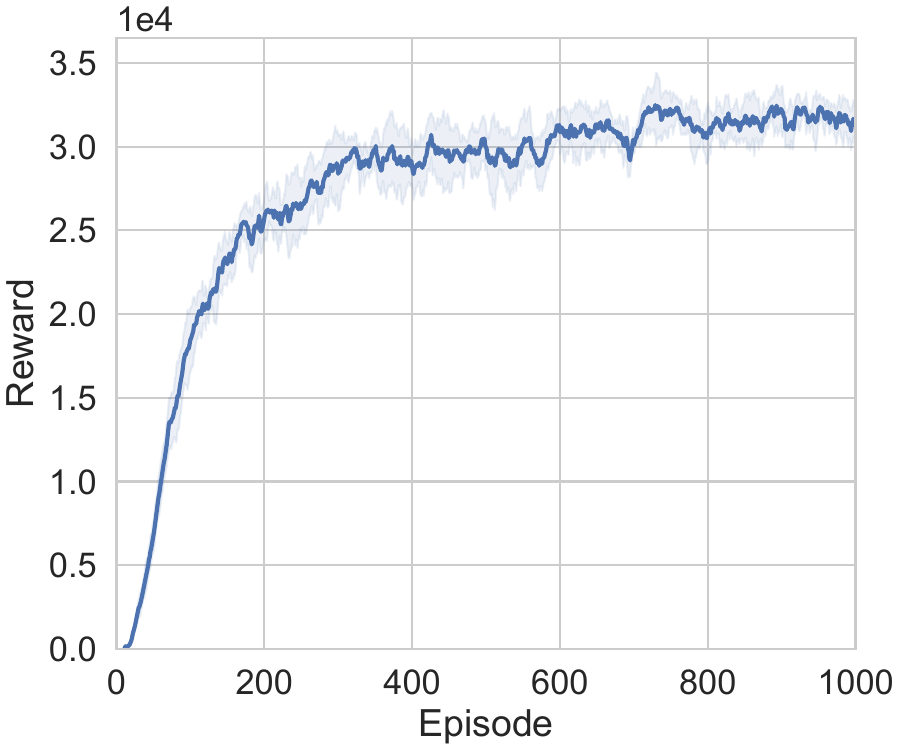}
\label{fig:reward_cheetah}
}
\subfigure[Swimmer]{
\includegraphics[width =  0.3\linewidth]{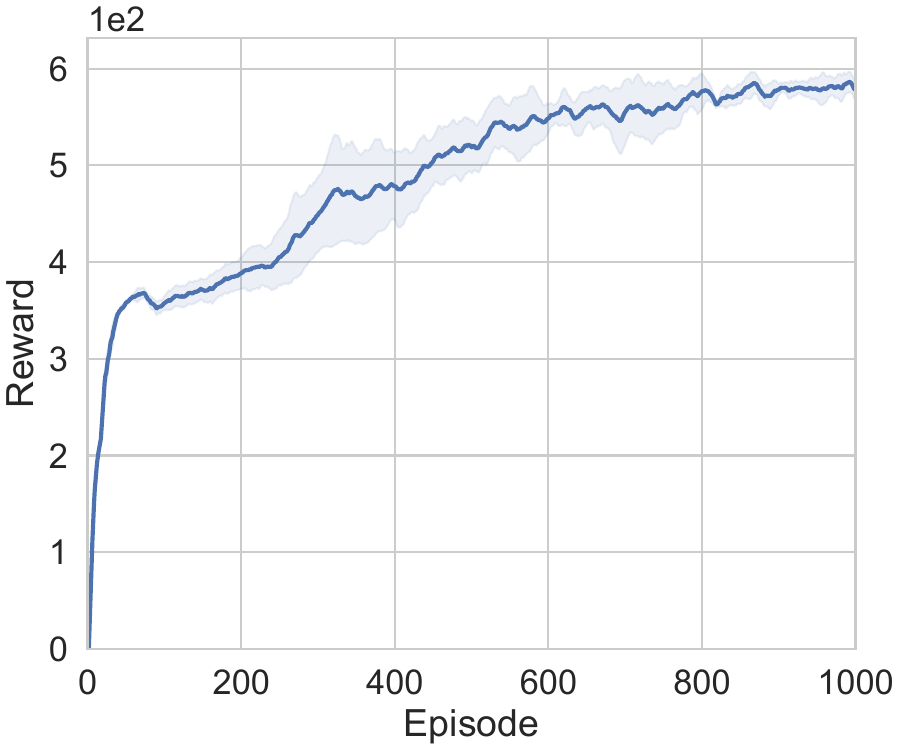}
\label{fig:reward_swimmer}
}
\caption{Experimental results: (a) Two-Agent Ant, (b) Half Cheetah, and (c) Swimmer.
}
\label{fig:Simulation_Results}
\end{figure*}

\subsection{Safe Multi-Agent MuJoCo}    
Safe Multi-Agent MuJoCo \cite{safemamujoco} is an extension of the multi-agent version of the MuJoCo simulator \cite{mamujoco, mujoco}. It provides a high-fidelity simulation environment for testing the efficacy of reinforcement learning algorithms in complex, continuous, and dynamic multi-agent scenarios. Built upon the renowned MuJoCo physics engine, it extends the single-agent benchmark tasks to the multi-agent domain, facilitating the study of cooperative, competitive, and mixed multi-agent interactions.

In our experiments subject to this environment, we configured several classic control and locomotion tasks to accommodate multiple agents. These tasks include environments such as Multi-Agent Ant, where each agent controls a subset of the limbs of a joint quadruped robot, and Multi-Agent Humanoid, which simulates a group of humanoid robots engaging in tasks that range from cooperative pushing of objects to competitive sumo wrestling.

\subsubsection{Metrics for Reinforcement Learning}
To evaluate the agents' performance, we define the metrics as the velocity of the x-axis speed. In our experiments, we conduct our algorithm on 2-Agent Swimmer, 2-Agent Ant and 2-Agent HalfCheetah tasks. The reward functions are defined as follows:

\begin{table}[ht]
\centering
\begin{tabular}{|c|c|c|}
\hline
Task & Reward Function \\
\hline
Swimmer & $\frac{\Delta x}{\Delta t} + 0.0001\alpha$ \\
\hline
Two-Agent Ant & $ \frac{\Delta x}{\Delta t} + 5 \cdot 10^{-4} \left\| \text{external contact forces} \right\|^2_2 + 0.5\alpha + 1 $ \\
\hline
HalfCheetah & $ \frac{\Delta x}{\Delta t} + 0.1\alpha$  \\
\hline
\end{tabular}
\caption{Reward Function of MAMuJoCo Tasks.}
\label{tab:reward_function}
\end{table}

\subsubsection{Cost Function}

In Safe MAMuJoCo, the cost encapsulates the potential for unsafe states and actions, grounding the reinforcement learning process in the realities of physical safety and operational integrity. The environment introduces a nuanced cost function that penalizes scenarios where the agent's actions may lead to states deemed unsafe, such as unsafe velocities. This paradigm shift from a sole focus on reward maximization to the inclusion of cost minimization necessitates a delicate balance. 
In our setting, we consider the scenarios as robots with time limits. For agents walking on a two-dimensional plane, the cost is calculated as:

\begin{equation}
\label{cost_function}
    C(s, a) = \sqrt{v_x^2 + v_y^2}.
\end{equation}

\subsection{Simulation Results}

\begin{figure}[h]
 \centering
 \includegraphics[width=0.6\linewidth]{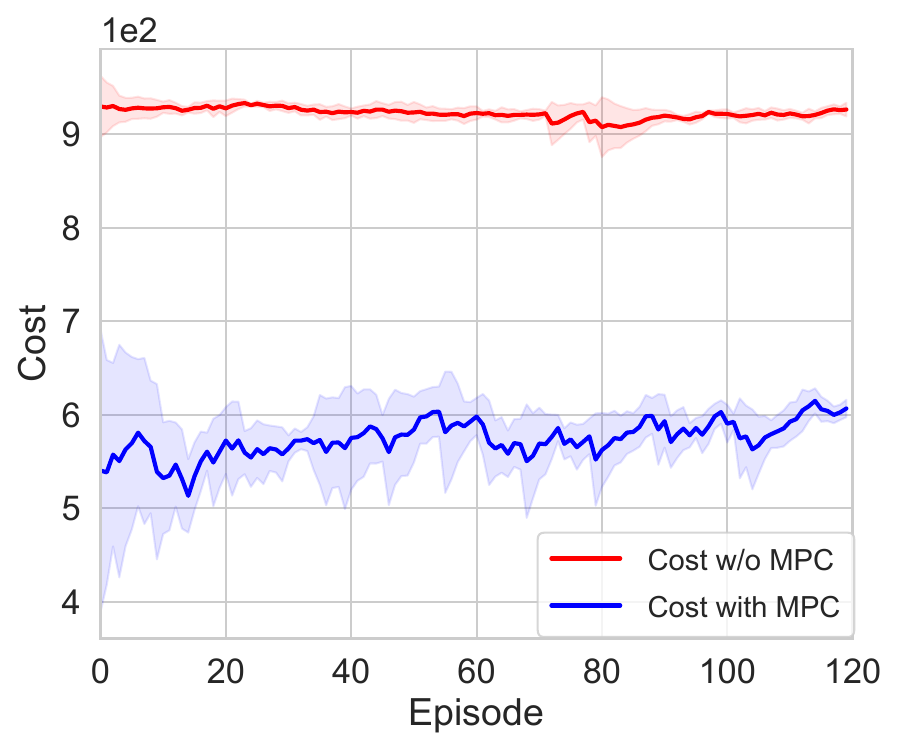}
 \caption{Comparison between with and w/o MPC.}
 \label{fig:mpc_cost}
\end{figure}

\begin{figure}[h]
 \centering
 \includegraphics[width=0.6\linewidth]{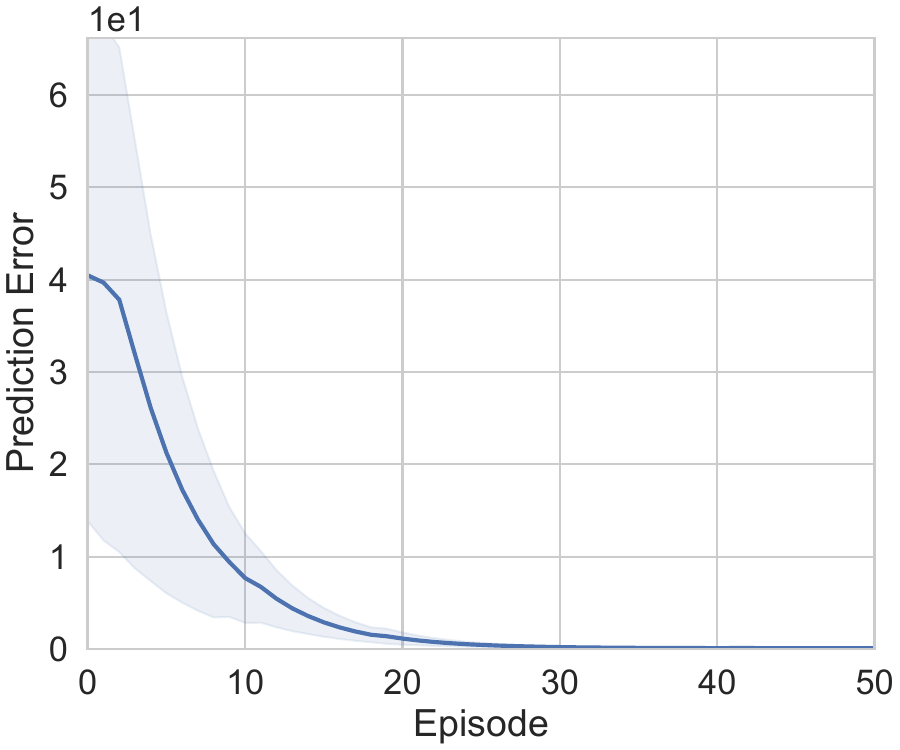}
 \caption{Prediction Error during Training.}
 \label{fig:prediction_error}
\end{figure}

The simulation prove the effectiveness of our algorithms. Figures \ref{fig:reward_ant}, \ref{fig:reward_swimmer} and  \ref{fig:reward_cheetah} present a comprehensive view of the MAPPO's performance over an extensive training steps. In these figures, we see a positive trend in the global reward, indicating a consistent improvement in the agents' ability to converge at 2-Agent Ant, Swimmer and Half Cheetah setting. Notably, the reward trajectory demonstrates a series of fluctuations, highlights the dynamic nature of the multi-agent environment and the agents' responses to the intrinsic stochasticity of the simulation.

Figure \ref{fig:mpc_cost} presents a comparative analysis of the costs incurred by MAPPO agents, both with and without the integration of an MPC controller, across a series of episodes. The data compellingly showcases the cost-reduction efficacy of the MPC controller, which reduces costs from over 900 to 600. The red line depicts the costs related to the actions taken by MAPPO agents without MPC adjustments, whereas the blue line illustrates the costs when actions are refined by the MPC controller. The shaded areas surrounding each line provides measures of costs.

Furthermore, Figure \ref{fig:prediction_error} offers an in-depth look at the prediction accuracy of our model. The depicted prediction error—measured as the Euclidean distance between the predicted next state $\hat{s}^{t+1}$ and the actual observed next state $s^{t+1}$ exhibits a downward trend throughout the training epochs. Early in the training, higher magnitudes of error are apparent, reflecting the initial calibration phase of the predictive model. The final results show that the error is limited within 0.0015, which testified the robustness of the dynamics predictor.

\begin{figure*}[h]
\centering
\subfigure[Two-Agent Ant]{
\includegraphics[width =  0.28\textwidth]{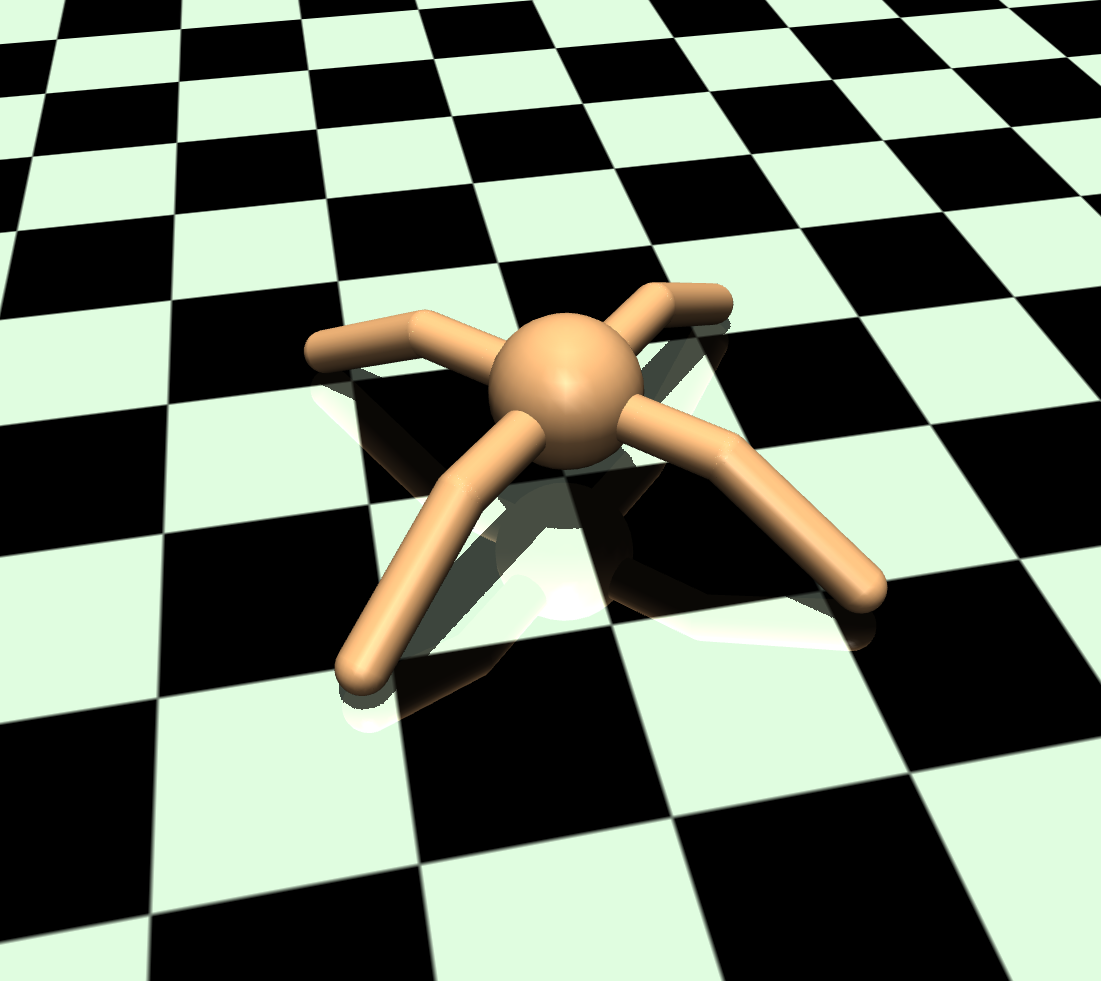}
\label{fig:ant}
}
\subfigure[Half Cheetah]{
\includegraphics[width =  0.28\textwidth]{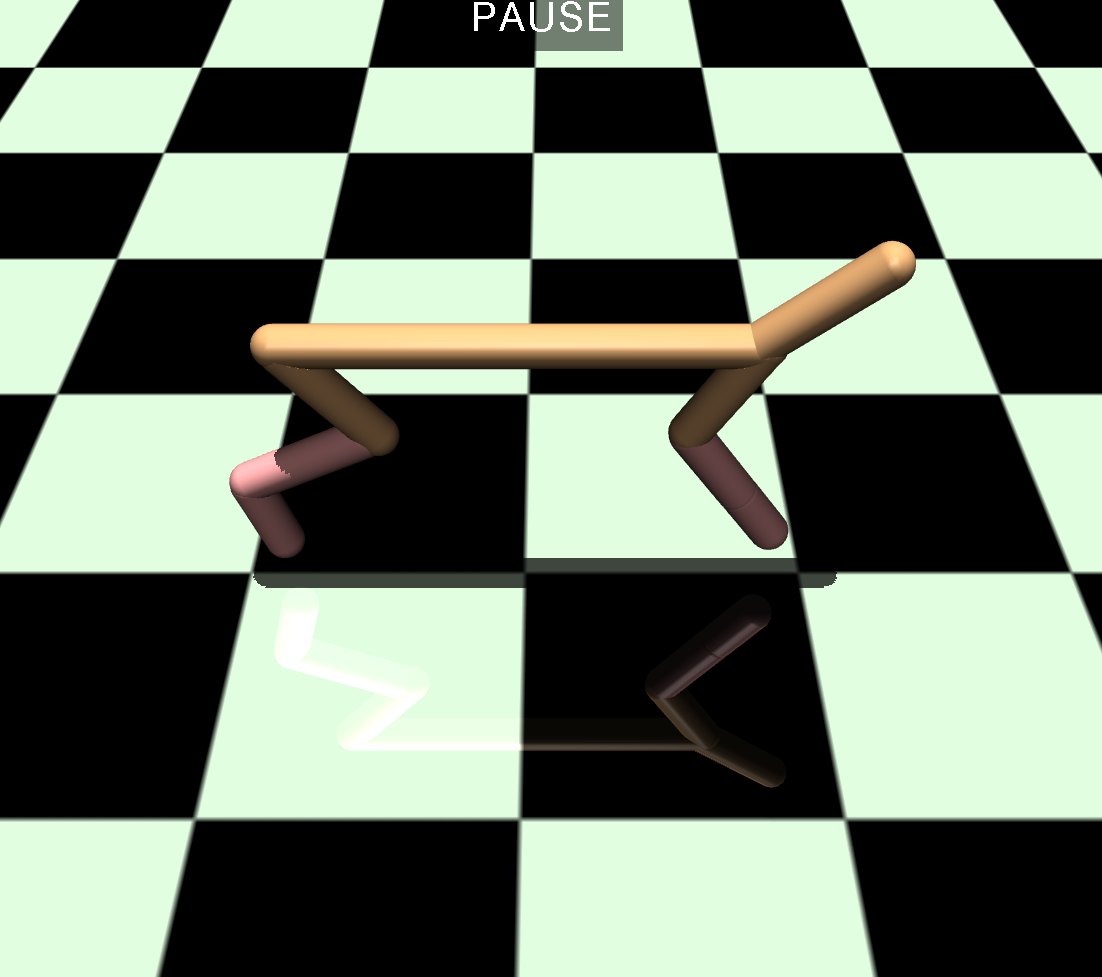}
\label{fig:cn}
}
\subfigure[Swimmer]{
\includegraphics[width =  0.28\textwidth]{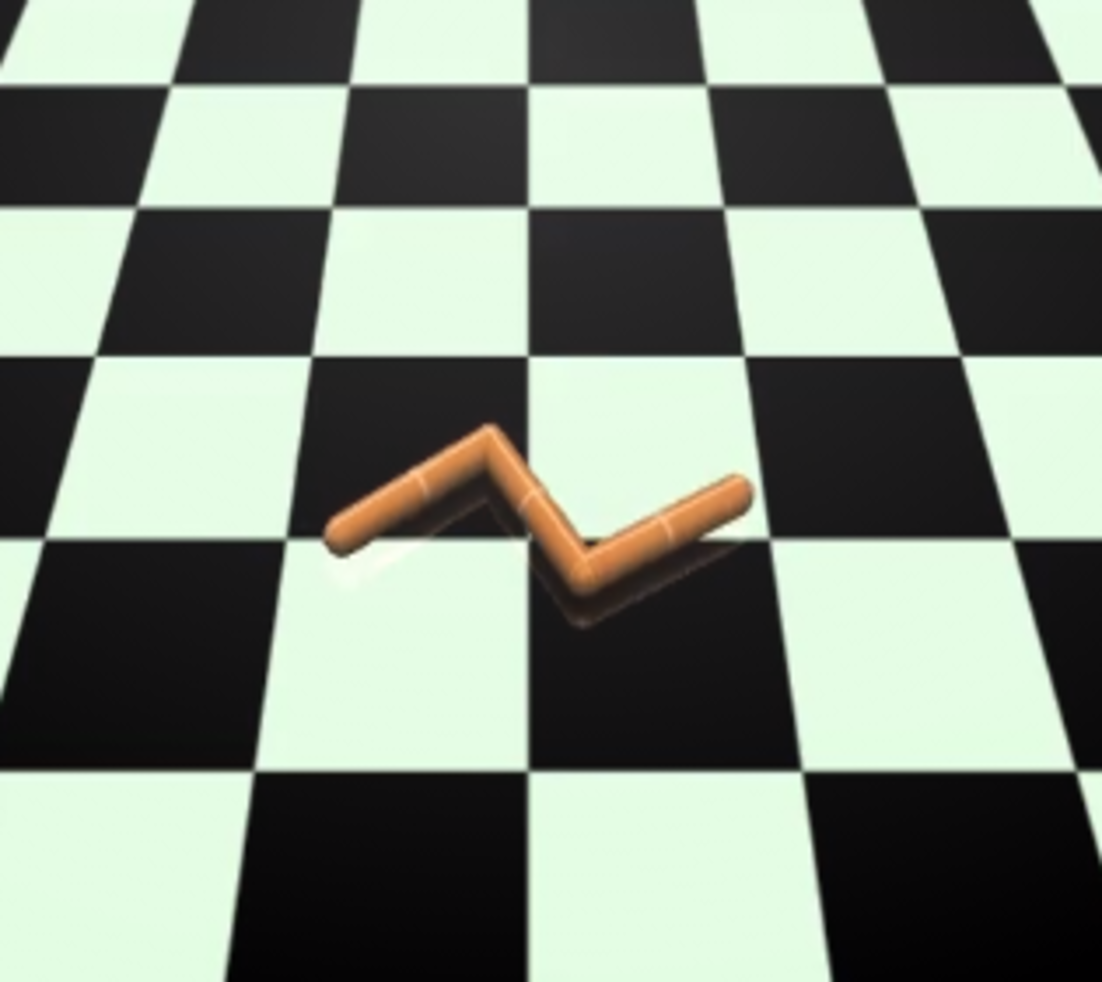}
\label{fig:pp}
}
\caption{Three environments in the experiments: (a) Two-Agent Ant, (b) Half Cheetah, and (c) Swimmer.
}
\label{fig:environments}
\end{figure*}



\section{CONCLUSIONS AND FUTURE WORK}
In this paper, we have presented DeepSafeMPC, an innovative approach that integrates deep learning with model predictive control to enhance safety in multi-agent environments. Our methodology leverages a robust dynamics predictor that facilitates accurate future state predictions by considering the current state and collective actions of all agents. Through a series of experiments in the Safe MAMuJoCo environment, DeepSafeMPC has shown promising results in managing the trade-off between task performance and operational safety. The experimental results highlight the adaptability and learning efficiency, as well as the system's capability to ensure safety.

Moving forward, DeepSafeMPC is poised for further development. A critical enhancement will be the implementation of safety constraints during the training phase of RL agents, not just execution, to refine the exploration process within safe boundaries. Additionally, future iterations will integrate uncertainty modeling into the predictor to bolster system robustness. Addressing these limitations will pave the way for deploying DeepSafeMPC in real-world applications with stringent safety requirements.



\section*{APPENDIX}
This appendix provides detailed descriptions of the neural network architectures and hyperparameters utilized in our experiments. We outline the configurations for the actor, critic, and predictor networks, alongside a comprehensive list of hyperparameters that govern their learning process.

\subsection{Neural Network Structures}
The structure of each neural network employed in our study is summarized in the table below. These architectures were designed to facilitate the learning process, with specific configurations for the actor, critic, and predictor networks. The table outlines the sequential arrangement of layers in each network, indicating the progression from input dimensions to output dimensions.

The learning process of the neural networks is guided by a set of hyperparameters, detailed in the table below. These parameters include both shared hyperparameters applicable to all networks and specific learning parameters unique to each model. This comprehensive listing ensures reproducibility and provides insights into the optimization strategies employed.

\subsection{Environment Descriptions}
\textbf{2x3 Half Cheetah:} This environment simulates a cheetah robot where two agents control three segments each. The objective is to maximize forward movement speed while adhering to safety constraints. The velocity of each agent is limited to ensure stability and safety, with a velocity threshold of 3.227.

\textbf{2x1 Swimmer:} In this setup, two agents each control one segment of a robotic swimmer. They work together to navigate through a fluid environment efficiently, with a velocity constraint of 0.04891 to prevent hazardous states.

\textbf{2x4 Ant:} This setup features an ant robot where two agents control four segments each. The goal is cooperative terrain navigation, with a safety constraint limiting velocity to 2.522 for safe interaction and movement.

The cost function is defined as an indicator function based on the agent's velocity exceeding a predefined threshold. Mathematically, it is represented as

\begin{equation}
\text{Cost} = 
\begin{cases} 
1 & \text{if velocity} > \text{Threshold} \\
0 & \text{otherwise}
\end{cases}.
\end{equation}

\begin{table}[htb]
\centering
\begin{tabular}{lccc}
\hline
Network Type & Layer 1 & Layer 2 & Output Layer \\ \hline
Actor        & input → 128 & 128 → 128 & 128 → output \\
Critic       & input → 128 & 128 → 128 & 128 → output \\
Predictor    & input → 64  & 64 → 64   & 64 → output  \\ \hline
\end{tabular}
\caption{Structure of Neural Networks}
\label{table:network-structures}
\end{table}

\begin{table}[htb]
\centering
\begin{tabular}{lc}
\hline
Hyperparameter       & Value     \\ \hline
Gamma ($\gamma$)            & 0.96      \\
GAE Lambda           & 0.95      \\
Target KL            & 0.016     \\
Searching Steps      & 10        \\
Accept Ratio         & 0.5       \\
Clip Parameter       & 0.2       \\
Learning Iters       & 5         \\
Max Grad Norm        & 10        \\
Huber Delta          & 10.0      \\
Actor LR             & 9.e-5     \\
Critic LR            & 5.e-3     \\
Optimizer Epsilon    & 1.e-5     \\
Weight Decay         & 0.0       \\ \hline
\end{tabular}
\caption{Shared and Learning Hyperparameters}
\label{table:hyperparameters}
\end{table}

Safety mechanisms integrated into these environments ensure that agents learn to avoid overspeeding, minimize the risk of collisions, and promote cooperative behavior among agents to achieve their objectives safely.

\subsection{Software and Hardware Specifications}
Experiments were conducted using the Gymnasium-Robotics framework and the MuJoCo physics engine on hardware comprising an Intel(R) Xeon(R) Gold 6254 CPU @ 3.10GHz and four NVIDIA A5000 GPUs. This setup ensures the computational efficiency and precision required for the demanding simulations involved in multi-agent reinforcement learning and safety evaluations.

\addtolength{\textheight}{-1cm}   

\bibliographystyle{unsrt}
\bibliography{cdc}

\end{document}